\documentclass{article}
\usepackage[accepted]{icml2021} 
\usepackage[table]{xcolor}
\usepackage{tikz}
\usetikzlibrary{matrix}
\usepackage{booktabs}
\usepackage{todonotes}
\presetkeys{todonotes}{inline}{}

\usepackage{subfig}
\usepackage{mathtools}
\usepackage{amssymb,amsfonts,dsfont,amsthm}
\usepackage{amsmath}
\usepackage[ruled,vlined, algo2e]{algorithm2e}
\newtheorem{theorem}{Theorem}
\newtheorem{corollary}{Corollary}
\newtheorem{definition}{Definition}

\newcommand{\votepurple}[1]{\textcolor{Purple}{$\bigstar$}}
\newcommand{\voteyellow}[1]{\textcolor{Goldenrod}{$\bigstar$}}
\newcommand{\voteblue}[1]{\textcolor{RoyalBlue}{$\bigstar$}}
\newcommand{\votepink}[1]{\textcolor{Pink}{$\bigstar$}}

\newcommand{\insts}[0]{\mathcal{I}}
\newcommand{\inst}[0]{i}

































\newcommand{\policy}[0]{\pi}

\newcommand{\stateRL}[0]{s}

\RestyleAlgo{algoruled}
\DontPrintSemicolon
\LinesNumbered
\SetAlgoVlined
\SetFuncSty{textsc}

\SetKwInOut{Input}{Input}
\SetKwInOut{Output}{Output}
\SetKw{Return}{return}

\newcommand{\citeN}[1]{\citeauthor{#1}~(\citeyear{#1})}

\DeclareMathOperator*{\argmax}{arg\,max}



\usepackage{pifont}

\usepackage{xspace} 
\newcommand{\spc}{{\textsc{SPaCE}}\xspace}

\usepackage{xstring}
\sfcode`\.=1001
\sfcode`\?=1001
\sfcode`\!=1001
\sfcode`\:=1001
\newcommand{\autocase}[1]{\ifnum\ifhmode\spacefactor\else2000\fi>1000 \uppercase{#1}\else#1\fi}

\newcommand{\task}{\autocase{i}nstance\xspace}
\newcommand{\atask}{\autocase{a}n \task}
\newcommand{\tasks}{\autocase{i}nstances\xspace}

\usepackage[switch]{lineno}

\usepackage{xr-hyper}
\makeatletter
\newcommand*{\addFileDependency}[1]{
  \typeout{(#1)}
  \@addtofilelist{#1}
  \IfFileExists{#1}{}{\typeout{No file #1.}}
}
\makeatother

\newcommand*{\myexternaldocument}[1]{%
    \externaldocument{#1}%
    \addFileDependency{#1.tex}%
    \addFileDependency{#1.aux}%
}
\myexternaldocument{appendix}

\usepackage{microtype}
\usepackage{graphicx}
\usepackage{booktabs} 

\usepackage{hyperref}




\icmltitlerunning{SPaCE}

\begin{document}

\twocolumn[
\icmltitle{Self-Paced Context Evaluation for Contextual Reinforcement Learning}



\icmlsetsymbol{equal}{*}

\begin{icmlauthorlist}
\icmlauthor{Theresa Eimer}{luh}
\icmlauthor{André Biedenkapp}{fr}
\icmlauthor{Frank Hutter}{fr,bo}
\icmlauthor{Marius Lindauer}{luh}
\end{icmlauthorlist}

\icmlaffiliation{luh}{Information Processing Institute (tnt),  Leibniz University Hannover, Germany}
\icmlaffiliation{fr}{Department  of  Computer  Science,  University  of  Freiburg, Germany}
\icmlaffiliation{bo}{Bosch Center for Artificial Intelligence, Renningen, Germany}

\icmlcorrespondingauthor{Theresa Eimer}{eimer@tnt.uni-hannover.de}

\icmlkeywords{Reinforcement Learning, Curriculum Learning}

\vskip 0.3in
]



\printAffiliationsAndNotice{}
\begin{abstract}
Reinforcement learning (RL) has made a lot of advances for solving a single problem in a given environment; but learning policies that generalize to unseen variations of a problem remains challenging.
To improve sample efficiency for learning on such \emph{\tasks} of a problem domain, we present \emph{Self-Paced Context Evaluation (\spc)}.
Based on self-paced learning, \spc automatically generates \task curricula online with little computational overhead. 
To this end, \spc leverages information contained in state values during training to accelerate and improve training performance as well as generalization capabilities to new \tasks from the same problem domain. Nevertheless, \spc is independent of the problem domain at hand and can be applied on top of any RL agent with state-value function approximation. 
We demonstrate \spc's ability to speed up learning of different value-based RL agents on two environments, showing 
better generalization capabilities and
up to $10\times$ faster learning compared to naive approaches such as round robin or SPDRL, as the closest state-of-the-art approach.
\end{abstract}

\section{Introduction}
Although Reinforcement Learning (RL) has performed impressively in settings like continuous control~\cite{lillicrap-iclr16}, robotics~\cite{openai-corr19a} and game playing~\cite{silver-nature16a,vinyals-nat19a}, their applicability is often very limited.
RL training on a given task takes a lot of training samples, but the skills acquired do not necessarily transfer to similar tasks as they do for humans.
An agent that is able to generalize across variations of a task, however, can be applied more flexibly and has a lower chance of succeeding when presented with unseen inputs.
Therefore improving generalization means improving sample efficiency and robustness to unknown situations. We view these as important qualities for real-world RL applications.

Curriculum learning \cite{bengio-icml09} aims to bridge the gap between agent and human transfer capabilities by training an agent the same way a human would learn: transferring experience from easy to hard variations of the same task.
It has been shown that generating such \emph{\tasks} with increasing difficulty to form a training curriculum can improve training as well generalization performance \cite{dendorfer-accv20-goalgan, matiisen-corr17, zhang-neurips20-vds}.
As information about instance difficulty is often not readily available, many approaches utilize the agent's progress markers, such as evaluation performance, confidence in its policy or its value function to minimize the need for domain knowledge \cite{wang-gecco19-poet, klink-neurips20-spdrl}. 
Because the progression is dictated by the agent's learning progress, this is called Self-Paced Learning \cite{kumar-neurips10a}.

\tasks in a curriculum can vary from the core task in different aspects, such as varying goals or movement speeds (see Fig.~\ref{fig:example_instances}).
While only selecting different goals states as \tasks is common for curriculum learning methods \cite{dendorfer-accv20-goalgan, zhang-neurips20-vds}, changing transition dynamics are important considerations regarding the robustness of a policy.
A dynamic change in robotics could for example be caused by a broken joint that the agent now has to adapt to.
To allow these changes in the transition dynamics, in addition to goal changes in the \tasks, we consider \emph{contextual} RL instead.
\begin{figure}[tbp]
    \begin{minipage}[t]{0.12\textwidth}
    \centering
    \scalebox{0.75}{
    \begin{tikzpicture}
        \draw [fill=blue, fill opacity=0.01] (0,0) -- (2,0) -- (2,2) -- (0,2) -- (0,0);
        \node[circle,fill=yellow!20!white,draw=gray,thin, minimum size=0.1cm] at (.25, .25){};
        \node[circle,fill=green!20!white,draw=gray,thin, minimum size=0.1cm] at (.75, .25){};
    \end{tikzpicture}
    }
    \end{minipage}%
    \begin{minipage}[t]{0.12\textwidth}
    \centering
    \scalebox{0.75}{
    \begin{tikzpicture}
        \draw [fill=blue, fill opacity=0.05] (0,0) -- (2,0) -- (2,2) -- (0,2) -- (0,0);
        \node[circle,fill=yellow!20!white,draw=gray,thin, minimum size=0.1cm] at (.25, .25){};
        \node[circle,fill=green!20!white,draw=gray,thin, minimum size=1cm] at (1.25, 1.25){};
    \end{tikzpicture}
    }
    \end{minipage}
    \centering
    \begin{minipage}[t]{0.12\textwidth}
    \centering
    \scalebox{0.75}{
    \begin{tikzpicture}
        \draw [fill=blue, fill opacity=0.13] (0,0) -- (2,0) -- (2,2) -- (0,2) -- (0,0);
        \node[circle,fill=yellow!20!white,draw=gray,thin, minimum size=0.1cm] at (.25, .25){};
        \node[circle,fill=green!20!white,draw=gray,thin, minimum size=0.6cm] at (1, 1.5){};
    \end{tikzpicture}
    }
    \end{minipage}%
    \caption{Example \tasks of the contextual PointMass environment.
    The agent's yellow starting point, the green goal and floor friction (indicated by shading) are part of the context and vary between \tasks.}
    \label{fig:example_instances}
\end{figure}
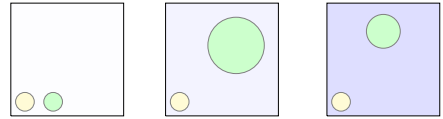%

Our contributions are as follows:
\begin{enumerate}
    \item We propose \spc, a new self-paced learning algorithm,
to automatically generate \task curricula in a general contextual RL setting, without any knowledge about instance difficulty being required and  with access to only a limited set of \tasks (see Section~\ref{space}).
    \item We show the convergence behavior of \spc to be at least as good as round robin (see Section~\ref{sub:convergence}).
    \item We demonstrate that \spc is capable of outperforming a round robin baseline~\cite{speck-prl20} as well as similar self-paced methods (see Section~\ref{experiments}).
\end{enumerate}

\section{Related Work}

There are different approaches to increase generalization capability in RL. Their goals and scopes differ substantially, however. 
MAML \cite{finn-icml17} and related meta-RL methods pre-train an agent such that specializing on one of the training tasks is then very efficient. These take different approaches of aggregating and propagating the gradients in training and are complementary approaches to \spc.

Domain randomization \citep[DR;][]{tobin-iros17} on the other hand varies the task space. In essence, DR creates new instances of tasks  in order to force the agent to adapt to alterations in its observations and policy. 
Other examples such as POET \cite{wang-gecco19-poet} and ADR \cite{openai-corr19a} sample instances at random but order them by leveraging knowledge about the environment.
Without prior knowledge of the target distribution, however, making appropriate changes is hard, resulting in either too little variation to facilitate generalization or deviating so much that the problem becomes too hard to learn.
Other approaches utilize human expert knowledge to facilitate generalization performance, such as human-in-the-loop RL \cite{thomaz-aaai06} or imitation learning \cite{hussein-acm17}.

Curriculum learning~\cite{bengio-icml09} uses expert knowledge to generate an ordering of training instances in such a way that knowledge can be transferred from hard to easy instances.
There are different approaches for automatically generating such instance curricula, including learning how to generate training instances \cite{dendorfer-accv20-goalgan, such-icml20a} similar to a teacher \cite{matiisen-corr17, turchetta-neurips20} or leveraging self-play as a form of curriculum generation \cite{sukhbaatar-iclr18, silva-bracis19}. 
In most of these cases, some knowledge of the \task space is required in order to either define a measure of \task difficulty or how to generate new instances. 
While instance generation requires only little prior knowledge, a separate agent will need to learn to generate instances of appropriate difficulty, which increases the training overhead significantly.

Value Disagreement based Sampling \citep[VDS;][]{zhang-neurips20-vds} on the other hand builds curricula for goal-directed RL. VDS uses the disagreement between different agents trained on the same instances to measure which training instance should be trained on next. Like its building block HER~\cite{andrychowicz-neurips17-her}, VDS is only compatible with goal-directed off-policy RL. 

One approach to order the training instances is explicitly using an agent's performance as an ordering criterion instead, called Self-Paced Learning. 
This can be done using the agent's value function as a substitute for actual episode evaluations. SPDRL \cite{klink-neurips20-spdrl} uses this idea to generate new instances uniquely suited to the agent's current training progress in order to progress towards specific hard instances. 
While this eliminates the need for a teacher, researchers instead need to know a priori which instances are considered the hard target instances and where the agent should start training in relation to them.

\section{Contextual Reinforcement Learning}
Before we describe \spc, we discuss how we can extend the typical RL formulation to allow for the notion of \tasks.
RL problems are generally modeled as Markov Decision Processes (MDPs), i.e., a $4$-tuple $\mathcal{M} \coloneqq (S, A, T, R)$ consisting of a state space $S$, a set of actions $A$, a transition function $T: S \times A \to S$ and a reward function $R: S \times A \to \mathbb{R}$. 
This abstraction however, only allows to model a specific instantiation of a problem and does not allow to deviate from a single \emph{fixed} \task.

An \task $\inst\in\insts$ in a set of \tasks $\insts$ could, e.g., determine a different goal position in a maze problem or different gravity conditions (i.e., moon instead of earth) for a navigation task.
Information about the \task at hand is called its context $c_\inst$.
This context can either directly encode information about the \task, e.g., the true goal coordinates, or the kind of robot that should be controlled.

In order to make use of context in our problem description, we consider contextual MDPs~\citep[cMDP;][]{hallak-corr15,modi-alt18,biedenkapp-ecai20}.
A contextual MDP $\mathcal{M}_{\insts}$ is a collection of MDPs $\mathcal{M}_{\insts} \coloneqq \{\mathcal{M}_{\inst}\}_{\inst \in \insts}$ with $\mathcal{M}_{\inst} \coloneqq (S, A, T_i, R_i)$. As the underlying problem stays the same, we assume the possible state and action spaces are consistent across all \tasks; however, the transition and reward functions are unique to each \task.\footnote{In goal-directed RL, \tasks can also only vary the reward function and keep dynamics constant.}

An optimal policy $\policy^*$ for such a cMDP optimizes the expected return over all \tasks $\insts$ with discount factor $\gamma$:
\begin{equation}
    \policy^* \in \argmax_{\policy \in \Pi} \frac{1}{|\insts|} \sum_{\inst \in \insts} \sum_t^T \gamma^t R_i(s_t, \policy(s_t))
\end{equation}
As the reward depends on the given \task $\inst$, an agent solving a cMDP can leverage the context $c_\inst$ along with the current state $s_t\in S$ in order to differentiate between \tasks.

\section{Self-Paced Context Evaluation}\label{space}
In order to generate a curriculum without any prior knowledge of the target domain, our \emph{Self-Paced Context Evaluation (\spc)} takes advantage of the information contained in an agent's state value predictions. 
By modelling $V^\pi(\stateRL_t, c_\inst)$, the agent learns to predict the expected reward from state $\stateRL_{t}$ on \task $\inst$ when following the current policy~$\policy$.
Therefore, we propose $V^\pi(\stateRL_0, c_\inst)$ as an estimate of the total expected reward given a starting state $\stateRL_0$.\footnote{For simplicity's sake, we assume that an environment has a single starting state $s_0$ and we do not integrate over all possible starting states.}

\begin{definition}
The \emph{performance improvement capacity (PIC)} of an instance is the difference in value estimation between point $t$ and $t-1$, that is:
\begin{equation}
d_t(\inst) = V^\pi_t(\stateRL_0, c_\inst) - V^\pi_{t-1}(\stateRL_0, c_\inst).
\end{equation}
\end{definition}

The intuition is, if the instance evaluation changes by a large amount, the agent has learned a lot about this instance in the last iteration and can potentially learn even more on it. Instances that are too easy or too hard will yield relatively small or no improvements.
\spc prefers instances on which it expects to make most learning progress. As most state-of-the-art RL algorithms use a value-based critic, each instance's PIC is easily computed during training.

Algorithm~\ref{SPACE} summarizes the idea of $\spc$. After some initialization in Lines~1-3, \spc
performs an update step for the current policy $\pi$ and the value function $V^\pi$ based on roll-outs on the current instance set $\insts_{curr}$. In principle, any RL algorithm with a value-function estimate can be used, such as $\mathcal{Q}$-learning or policy search based on an actor-critic.
 In Lines~6-7, \spc updates the average instance evaluation and the difference to the last iteration; note that this only considers the current set of instances $\insts_{curr}$. 
In \mbox{Lines~8-9}, \spc first checks whether the value function $V_t^\pi$ changed $\Delta V_t^\pi$ by a factor \mbox{$\eta < 1.0$} compared to the value function before the update.
If the update led to an insignificant change of the value function, \spc assumes that the learning sufficiently converged and we can add $\kappa$ new instances to $\insts_{curr}$.
Starting in Line~10, \spc determines which instances in $\insts$ should be included in $\insts_{curr}$. For each instance, \spc first computes how much the value function changed, $d_t(i)$.
The instances with the highest PIC regarding $V^\pi$ are chosen as $\insts_{curr}$ (Lines~12-13), assuming that it is easy to make progress on these instances right now.
Note, we evaluate the influence of the $\eta$ and $\kappa$ hyperparameters on the learning behaviour of \spc in our experiments.

 \begin{algorithm2e}[tbp]
    \caption{\spc curriculum generation}
    \label{SPACE}
    {
    \KwData{policy $\policy$, value function $V$, \task set $\insts$, threshold $\eta$, step size $\kappa$, \#iterations $T$}
        $S, t := 0$\;
        $V_0 := 0$\;
        $\mathcal{I}_{curr} := \{i\}$ with $i$ randomly sampled from $\insts$\;
    	\For{$t=1 ... T$}{ 
    	    $\policy, V^\pi_t := update(\policy, V^\pi_{t-1}, \mathcal{I}_{curr})$\;
    	    $V_t^\pi := \frac{1}{|\mathcal{I}_{curr}|}\sum_{i \in \mathcal{I}_{curr}} |V^\pi_{t}(s_0, c_i)|$\;
	    	\If{$V_t^\pi \in [(1-\eta)V_{t-1}^\pi, (1+\eta)V_{t-1}^\pi]$}{
	    	\tcp*[h]{Increase set size}\;
	    	$S := S + \kappa$\;}
	    	\tcp*[h]{Choose next instance set}\;
	        \ForAll{$i \in \mathcal{I}$} 
		        {$ d_t(i) := V^\pi_t(\stateRL_0, c_\inst) - V^\pi_{t-1}(\stateRL_0, c_\inst)$}
	    	$\mathcal{I}_{curr} := S$ instances with highest $d_t(i)$\;
            $t := t+1$\;
    	}}
  \end{algorithm2e}

\subsection{Exemplary Application of \spc}

As a motivating example, we consider the CartPole environment \cite{gym} with three different pole lengths, see Figure~\ref{fig:cartpole_instances}.
We use a small DQN (hyperparameters given in Appendix~\ref{app:hypers}) for this example with the pole length being given as an additional state feature.
Although CartPole is generally considered as easy to solve, using poles of different length causes the DQN using a round robin curriculum to be unable to improve over time (see Figure~\ref{fig:cartpole_perf}).
\spc on the other hand is able to generate curricula that allow the DQN to learn how the cart has to be moved for the different poles and thus improve considerably to a mean performance of around $150$ per episode compared to round robin's $25$.
\begin{figure}
    \includegraphics[width=0.5\textwidth]{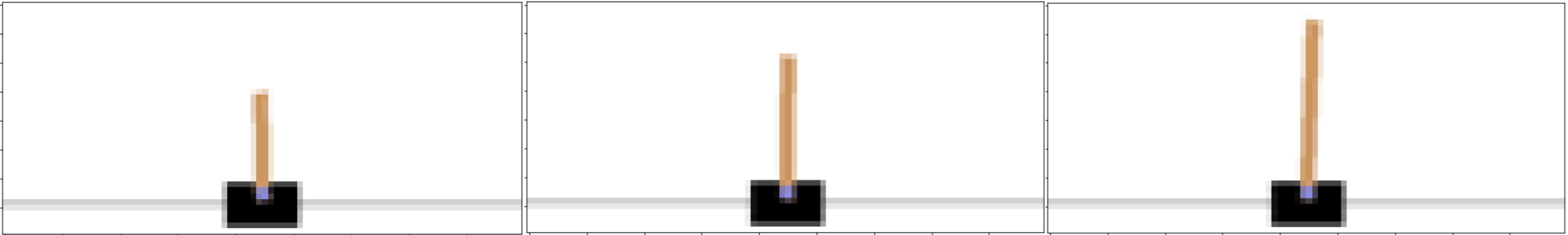}
    \caption{CartPole \tasks: short (s), medium (m) and long (l) balancing pole.}
    \label{fig:cartpole_instances}
\end{figure}
\begin{figure}
    \centering
    \includegraphics[width=0.5\textwidth]{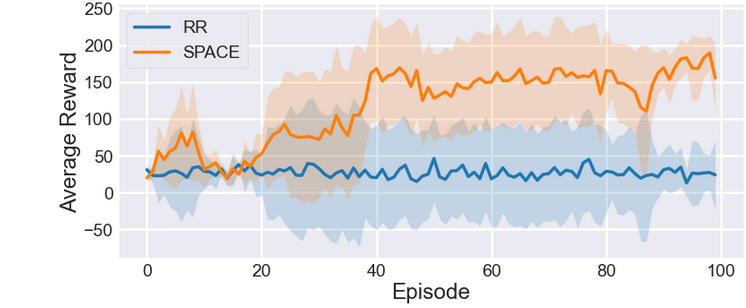}
    \caption{Performance ($\pm$ std.) comparison of \spc and default instance ordering on CartPole over $5$ seeds each.}
    \label{fig:cartpole_perf}
\end{figure}

In Figure~\ref{fig:cartpole_curriculum} we can see the main difference between the two methods. 
While the round robin agent trains on all three different variations one episode each, \spc only chooses to train on the cart with the long pole twice before episode $40$.
Instead, the focus is on a single instance at a time, using either the short or medium pole and changing not every episode but trains on an instance for at least three consecutive episodes. 
This shows that the value function can provide guidance as to instance similarity, as we would expect that the short and medium sticks behave in a similar way, as well as difficulty, the long pole being the hardest to control of the three.
While the changes in the value function may not provide a completely stable curriculum, training on one instance for a flexible amount of episodes instead of one episode already has a big impact on overall performance.
Furthermore, comparing the curriculum to the performance curve, focusing on only one instance at a time already leads to the agent performing considerably better on all of them. 
This validates the idea that there are underlying dynamics common to all three pole lengths which are important to learn and then refine according to the instance dynamics.
\begin{figure}
    \centering
    \includegraphics[width=0.5\textwidth]{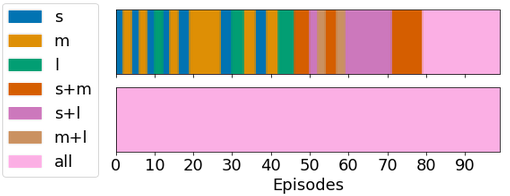}
    \caption{One exemplary run of \spc (top) and round robin (bottom) curricula on CartPole.}
    \label{fig:cartpole_curriculum}
\end{figure}

\subsection{Convergence of \spc}\label{sub:convergence}

To discuss under which conditions \spc will converge, we consider two cases.

\begin{theorem}\label{theorem:all_insts}
Given a set of \tasks $\mathcal{I}$ that are sufficiently distinguishable by their context $c_i$ as well as an instance of \spc with $\eta > 0$, $\kappa \geq 1$ and an agent with value function $V_t$. 
If the value function estimation $V^\pi_t$ converges in the limit to some value function $V$ on each instance (\mbox{$\forall i \in \mathcal{I}. \forall s \in S: \lim_{t \to \infty} V_t^\pi(s, c_i) = V(s, c_i)$}) and globally (\mbox{$\lim_{t \to \infty} V_t^\pi(s) = V(s)$}), \spc{} will eventually include all \tasks in the curriculum.
\end{theorem}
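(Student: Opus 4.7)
The plan is to reduce the claim to showing that the counter $S$ in Algorithm~\ref{SPACE} grows without bound, because once $S\geq |\insts|$ the selection step ``$\insts_{curr}:=S$ instances with highest $d_t(i)$'' necessarily returns all of $\insts$. Equivalently, it suffices to prove that the expansion test $V^\pi_t\in[(1-\eta)V^\pi_{t-1},(1+\eta)V^\pi_{t-1}]$ fires infinitely often. Since every successful firing adds $\kappa\geq 1$ to $S$, infinitely many firings force $S\to\infty$, and because $\insts$ is finite, $S\geq |\insts|$ is reached at some finite iteration.

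To establish that the test fires infinitely often, I would rely on the per-instance convergence hypothesis. Because $\insts$ is finite, the pointwise convergence $V^\pi_t(s,c_i)\to V(s,c_i)$ is automatically uniform over $\insts$, so $\max_{i\in\insts}|V^\pi_t(s_0,c_i)-V^\pi_{t-1}(s_0,c_i)|\to 0$. The scalar aggregate $V^\pi_t$ used in the threshold is a convex combination (over the current subset $\insts_{curr}$) of quantities $|V^\pi_t(s_0,c_i)|$, each converging to $|V(s_0,c_i)|$, so $V^\pi_t$ is bounded and $|V^\pi_t-V^\pi_{t-1}|\to 0$ regardless of how $\insts_{curr}$ reshuffles. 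Writing $V^\star:=\lim_t V^\pi_t$, the assumed global convergence $V^\pi_t(s)\to V(s)$ guarantees this limit exists; provided $V^\star>0$, which is the non-degenerate case for any meaningful RL task, we obtain $V^\pi_t/V^\pi_{t-1}\to 1$, so the ratio test is satisfied for every sufficiently large $t$, completing the argument.

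The main obstacle I anticipate is the moving-window nature of the aggregate: $V^\pi_t$ is not a single convergent sequence but an average over the time-dependent subset $\insts_{curr}$, so one cannot invoke a plain Cauchy argument on a fixed quantity. I would overcome this by decoupling within-subset drift, controlled by uniform convergence of the per-instance value functions, from between-subset fluctuation, bounded by the uniform gap between current and limiting values times a factor depending only on $|\insts|$. A secondary technical annoyance is the boundary case $V^\star=0$, where the multiplicative interval collapses and must be re-interpreted as an absolute threshold centred at zero. Finally, the ``sufficiently distinguishable by context'' hypothesis appears not to be needed for mere eventual inclusion; I expect it matters only for a strengthened quantitative statement giving a rate, or for ensuring that the PIC ranking produces a well-defined (tie-free) ordering in the selection step.
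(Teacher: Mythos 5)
Your proposal follows essentially the same route as the paper's proof: per-instance convergence on the finite set $\insts$ forces the successive differences of the aggregate $V_t^\pi$ to vanish, so the multiplicative threshold test fires repeatedly, $S$ grows by $\kappa\geq 1$ each time, and once $S\geq|\insts|$ the selection step returns all of $\insts$. The paper phrases the repetition as an induction over subset sizes $n=1,\dots,|\insts|$ rather than as ``$S\to\infty$'', but the substance is identical, and like you it never uses the ``sufficiently distinguishable'' hypothesis. Where you go beyond the paper is in naming the two genuine technical obstacles (the moving window $\mathcal{I}_{curr}$ and the degenerate limit $V^\star=0$); the paper silently ignores both. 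Be aware, though, that your proposed cure for the first does not work as stated: when $\mathcal{I}_{curr}$ at step $t-1$ and at step $t$ are different subsets, the difference of the two aggregates converges to the difference of the averages of the \emph{limiting} values $|V(s_0,c_i)|$ over those two subsets, which is a fixed constant that need not be small and is not controlled by the uniform approximation gap times any function of $|\insts|$. With $S=1$ and the top-PIC instance alternating between two \tasks with different limiting values, $|V_t^\pi-V_{t-1}^\pi|$ need not tend to $0$ and the test could in principle never fire. One would either have to show the selected subset stabilizes between expansions or compare against the average over a fixed reference subset; the paper's proof makes exactly the same unacknowledged leap, so your sketch is no weaker than the published argument, merely more explicit about where it creaks.
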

\begin{proof}
Since for all $\inst \in \insts$:
\begin{equation}\label{eq:single_convergence}
\lim_{t \to \infty} V_t^\pi(s_0, c_{\inst}) = V(s_0, c_{\inst})
\end{equation}
and therefore it follows that:
\begin{equation}
    \lim_{t \to \infty} \Delta V_{t-1}^\pi = ||V^\pi_t(s_0, c_i)| - |V^\pi_t(s_0, c_i)|| \to 0
\end{equation}
Thus \spc is guaranteed to include at least one other \task $\inst'$ in the new curriculum $\mathcal{I}_{curr}$ at some point $t$.
Now we assume that we are given any $\mathcal{I}' \subseteq \mathcal{I}$ with size \mbox{$n < |\mathcal{I}|$}.
As \mbox{$\forall i \in \mathcal{I}. \forall s \in S: \lim_{t \to \infty} V_t^\pi(s, c_i) = V(s, c_i)$} and Equation~\ref{eq:single_convergence}, convergence of $V_t^\pi$ on the subset $\mathcal{I}'$ follows:
\begin{equation}
    \forall i \in \mathcal{I}': \lim_{t \to \infty} V^\pi(s_0, c_i) = V(s_0, c_i)
\end{equation}
Therefore, as in the single instance case:
\begin{equation}
    \lim_{t \to \infty} \Delta V_t^\pi \to 0
\end{equation}
and a new instance is added.

As the curriculum is guaranteed to be extended for any instance set of size $n=1$ and $n\leq|\mathcal{I}|$, \spc will eventually construct a curriculum using the whole instance set.
\end{proof}

\begin{corollary}
If \spc{} covers all instances at some time point, it will be only slower than round robin by a constant factor in the worst case.
\end{corollary}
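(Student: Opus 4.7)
My strategy is to split SPaCE's trajectory into a warm-up phase, during which $\mathcal{I}_{curr} \subsetneq \mathcal{I}$, and a steady-state phase, during which $\mathcal{I}_{curr} = \mathcal{I}$, and then to argue two things: (i) the steady-state phase is equivalent to round robin in terms of the number of samples drawn per instance per update, and (ii) the warm-up phase contributes only a bounded number of extra iterations that can be absorbed into a constant multiplicative factor on the total training budget.

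First I would invoke Theorem~\ref{theorem:all_insts} to obtain a finite time $t^{*}$ at which SPaCE first adds every instance to $\mathcal{I}_{curr}$. To bound $t^{*}$ explicitly, I would observe from Algorithm~\ref{SPACE} that $S$ is incremented by $\kappa$ every time $V_t^\pi$ falls inside the $\eta$-band around $V_{t-1}^\pi$, and that by the convergence hypothesis of Theorem~\ref{theorem:all_insts} each such stabilization happens within finitely many iterations. Combining these observations gives $t^{*} \le \lceil (|\mathcal{I}|-1)/\kappa\rceil \cdot t_{\max}$, where $t_{\max}$ is the worst-case number of inner iterations between two consecutive expansions; crucially, $t_{\max}$ depends only on the cMDP, the hyperparameters $\eta$ and $\kappa$, and the base RL algorithm, and not on the overall time horizon. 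Next I would show that for every $t \ge t^{*}$ line~5 of Algorithm~\ref{SPACE} performs the update on the full set $\mathcal{I}$, drawing the same per-instance sample budget as round robin. Hence if $T_{\mathrm{RR}}(\varepsilon)$ iterations suffice for round robin to reach performance $\varepsilon$, SPaCE reaches the same performance in at most $T_{\mathrm{RR}}(\varepsilon) + t^{*}$ iterations, yielding a worst-case slowdown factor of $1 + t^{*}/T_{\mathrm{RR}}(\varepsilon)$, which is a constant as soon as $T_{\mathrm{RR}}(\varepsilon) \ge t^{*}$.

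The main obstacle I expect is handling the PIC-based reordering in line~13, which a priori could shrink $\mathcal{I}_{curr}$ or make it oscillate even after time $t^{*}$, so that the simple claim ``$\mathcal{I}_{curr} = \mathcal{I}$ for all $t \ge t^{*}$'' needs an extra monotonicity argument. One way to discharge this is to note that $S$ in Algorithm~\ref{SPACE} is non-decreasing in $t$, so once $S \ge |\mathcal{I}|$ the selection in line~13 must return the whole instance set regardless of the PIC ranking. A secondary subtlety is that within a full-set update the inner instance ordering need not coincide with round robin's cyclic schedule; I would close this gap by observing that the only quantity entering round robin's analysis is the per-round per-instance sample count, which is permutation invariant, so the two schemes agree for the purpose of this comparison.
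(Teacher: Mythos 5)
Your proposal is correct and follows essentially the same route as the paper's own proof: bound the length of the warm-up phase until $\mathcal{I}_{curr}=\mathcal{I}$ (the paper writes this as $\mathcal{O}(|\mathcal{I}|\cdot k)$ steps with $k$ the worst-case time between expansions, assuming $\kappa=1$), and then observe that \spc{} degenerates to a round robin scheme, so the overhead is a bounded additive term and hence a constant factor. Your version is somewhat more careful than the paper's on two points it leaves implicit --- the monotonicity of $S$ guaranteeing that the full set stays selected despite the PIC reordering, and the permutation invariance of the per-round sample counts --- but the underlying argument is the same.
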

\begin{proof}
Assume $\kappa = 1$ and that the learning agent requires $\mathcal{O}(K)$ steps to converge on a single \task.

If the agent is not able to transfer any of its gained knowledge between any of the tasks, \spc will require to train an agent $\mathcal{O}(k)$ steps before growing the curriculum, where $k \leq K$, depending on $\eta$.
\spc will thus require $\mathcal{O}(|\mathcal{I}| \cdot k)$ steps to include all $|\mathcal{I}|$ \tasks in the curriculum.
At this point, \spc behaves as a round robin schedule does, i.e., iterating over each \task while training the agent.
Therefore, even if the construction of a meaningful curriculum should have failed, \spc{} can recover by falling back to a round robin scheme after $\mathcal{O}(|\mathcal{I}| \cdot k)$ steps.
\end{proof}

\begin{corollary}
If the value function estimation converges to the \emph{true} value function $V^*$, \spc will also converge to the optimal policy. \hfill\qedsymbol
\end{corollary}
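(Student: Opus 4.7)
The plan is to chain together Theorem~\ref{theorem:all_insts} with the definition of $V^\pi$. By Theorem~\ref{theorem:all_insts}, the convergence assumption (which is implied by the stronger hypothesis that $V_t^\pi \to V^*$) guarantees that after finitely many outer iterations the curriculum $\mathcal{I}_{curr}$ contains every instance $i \in \mathcal{I}$. From that point on, \spc{} reduces to standard training over the full instance set, so it suffices to show that $V_t^\pi \to V^*$ forces the emitted policy to be optimal for the cMDP $\mathcal{M}_{\mathcal{I}}$.

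Since $V_t^\pi$ is by construction the state-value function of the current policy $\pi_t$, the hypothesis states that in the limit the policy $\pi_\infty$ satisfies $V^{\pi_\infty}(s, c_i) = V^*(s, c_i)$ for every $s \in S$ and every $i \in \mathcal{I}$. By the Bellman optimality characterisation applied to each $\mathcal{M}_i$, this implies $\pi_\infty$ is optimal on each individual MDP $\mathcal{M}_i$. Averaging the per-instance returns then yields optimality with respect to the cMDP objective $\frac{1}{|\mathcal{I}|}\sum_{i \in \mathcal{I}} \sum_t \gamma^t R_i(s_t, \pi(s_t))$, so $\pi_\infty$ lies in $\argmax_{\pi \in \Pi} \frac{1}{|\mathcal{I}|}\sum_{i \in \mathcal{I}} \sum_t \gamma^t R_i(s_t, \pi(s_t))$, which is precisely $\pi^*$.

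The main obstacle, mild as it is, lies in ensuring that the hypothesis ``$V_t^\pi$ converges to the true value function $V^*$'' is legitimately applied to every instance. \spc{} may visit some instances only very late in training, and one might worry that convergence only holds for instances that appear early in the curriculum while others are evaluated by a stale $V^\pi$. Theorem~\ref{theorem:all_insts} is precisely what rules this out: it certifies that every $i \in \mathcal{I}$ eventually enters $\mathcal{I}_{curr}$ and is thereafter trained on indefinitely, so the per-instance convergence hypothesis genuinely applies uniformly over $\mathcal{I}$ and the previous paragraph closes the argument.
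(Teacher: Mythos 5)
The paper states this corollary without any separate proof, attaching the \qedsymbol{} directly to the statement and treating it as an immediate consequence of Theorem~\ref{theorem:all_insts} together with the standard fact that a policy whose value function equals $V^*$ on every instance is optimal for each $\mathcal{M}_i$ and hence for the averaged cMDP objective. Your write-up simply makes that implicit argument explicit, so it is correct and takes essentially the same route as the paper.
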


Assume the worst case in which the value function estimate does not converge, but either oscillates or even diverges.
This could happen if \spc{} jumps between two disjoint instance sets $\mathcal{I}_1$ and $\mathcal{I}_2$ and the progress on $\mathcal{I}_1$ is lost by switching to $\mathcal{I}_2$ and vice versa.\footnote{Note: Though theoretically possible, we have never observed this problem in practice.}
Whenever we detect that learning is not progressing further and convergence is not achieved (i.e., $\Delta V^\pi_t \neq 0$ and $\mathcal{I}_{curr} \neq \mathcal{I}$),
\spc could simply increase $\eta$. As this hyperparameter controls how strict the convergence criterion is, increasing the value will allow for new instances to be added to the training set even though the original convergence criterion has not been met. The least value to  which $\eta$ should be set to guarantee an increase of instances is $\frac{\Delta V^\pi_t + \epsilon}{V^\pi_{t-1}}$ for any $\epsilon > 0$ to eventually train on all instances.

\begin{theorem}\label{theorem:dynamic_eta}
If the value function estimate is not guaranteed to converge (e.g., in deep reinforcement learning), \spc{} can still recover a round robin scheme by increasing the threshold $\eta$ if needed.
\end{theorem}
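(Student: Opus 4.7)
The plan is to show that by adapting $\eta$ in response to observed non-convergence, we can force the growth condition in Line 7 of Algorithm~\ref{SPACE} to trigger, which guarantees that $\mathcal{I}_{curr}$ eventually equals $\mathcal{I}$, at which point \spc reduces to a round robin schedule over the whole instance set.

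First, I would set up the contrapositive perspective: suppose at some iteration $t$ the growth criterion fails, i.e., $V_t^\pi \notin [(1-\eta)V_{t-1}^\pi,\, (1+\eta)V_{t-1}^\pi]$. This is equivalent to $\Delta V_t^\pi := |V_t^\pi - V_{t-1}^\pi| > \eta \cdot |V_{t-1}^\pi|$. The next step is to observe that for any $\epsilon > 0$, choosing a new threshold
\begin{equation}
\eta' \;\geq\; \frac{\Delta V_t^\pi + \epsilon}{|V_{t-1}^\pi|}
\end{equation}
immediately makes the criterion hold with strict slack $\epsilon$, so on the following iteration \spc increases $S$ by $\kappa$ and extends the curriculum by at least one instance. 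Thus, whenever non-convergence is detected and $\mathcal{I}_{curr} \neq \mathcal{I}$, a single upward adjustment of $\eta$ suffices to add a new instance.

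I would then argue by induction over the number of missing instances $|\mathcal{I} \setminus \mathcal{I}_{curr}|$: each time the value estimate fails to satisfy the current $\eta$, we apply the adjustment above, which reduces this count by at least one (since $\kappa \geq 1$). After at most $|\mathcal{I}|$ such adjustments, $\mathcal{I}_{curr} = \mathcal{I}$, and from that point on \spc trains on all instances simultaneously, i.e., it degenerates to a round robin sweep over $\mathcal{I}$. The total overhead compared to pure round robin is bounded by the number of pre-convergence iterations, matching the constant-factor bound from the previous corollary.

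The main obstacle is the degenerate case $V_{t-1}^\pi = 0$ (or very close to zero), where the ratio $\Delta V_t^\pi / |V_{t-1}^\pi|$ is undefined or unbounded. I would handle this by adopting the convention that the criterion is declared satisfied whenever $|V_{t-1}^\pi|$ falls below a small positive tolerance (since in that regime no meaningful improvement signal remains to be extracted from $\Delta V_t^\pi$), which again triggers growth; alternatively one can initialize and re-sample so that $|V_{t-1}^\pi|$ is bounded away from zero on the current $\mathcal{I}_{curr}$. Either convention closes the argument and ensures that, under adaptive $\eta$, the curriculum is always extended in finite time, so round robin behavior is recovered in the worst case.
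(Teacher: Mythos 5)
Your argument is essentially the paper's own proof: both set the new threshold to $\frac{\Delta V_t^\pi + \epsilon}{V_{t-1}^\pi}$ so that the growth condition $\Delta V_t^\pi < \eta\, V_{t-1}^\pi$ trivially holds, then iterate until $\mathcal{I}_{curr} = \mathcal{I}$ and observe that \spc{} then behaves as round robin. Your explicit induction on $|\mathcal{I} \setminus \mathcal{I}_{curr}|$ and your handling of the degenerate case $V_{t-1}^\pi = 0$ are small refinements the paper omits, but they do not change the route.
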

\begin{proof}
If at any point $t$, $\Delta V^\pi_t \neq 0$ and $\mathcal{I}_{curr} \neq \mathcal{I}$, we apply the method described above and set $\eta = \frac{\Delta V^\pi_t + \epsilon}{V^\pi_{t-1}}$.
Then the condition to increase the instance set size is $\Delta V_t^\pi < \eta \cdot V_{t-1}^\pi \rightarrow \Delta V_t^\pi < \frac{\Delta V^\pi_t + \epsilon}{V^\pi_{t-1}} \cdot V_{t-1}^\pi \rightarrow \Delta V_t^\pi < \Delta V_t^\pi + \epsilon$.
Thus the instance set size is guaranteed to be increased.
As this is true for any point in training, \spc{} can still consider all instances at some point $t^*$ and thus perform as well as round robin from $t^*$ onward. 
\end{proof}

\section{Experiments}\label{experiments}
In this section we empirically evaluate \spc on two different environments.
The code for all experiments is available at \href{https://github.com/automl/SPaCE}{https://github.com/automl/SPaCE}.
We first describe the experimental setup before comparing \spc against a round robin (RR) training scheme and SPDRL~\cite{klink-neurips20-spdrl} as a state-of-the-art self-paced RL baseline.
Finally we evaluate the influence of \spc's own hyperparameters and limitations.

\subsection{Setup}
We evaluated \spc in settings that readily allow for context information to encode different \tasks, namely the \emph{Ant} locomotion environment~\cite{pybullet}, the gym-maze environment~\cite{mazes} and the \emph{BallCatching} and contextual \emph{PointMass} environments as used by \citeN{klink-neurips20-spdrl}.

The task in Ant is to control a four legged ant robot towards a goal on a flat 2D surface as quick as possible.
The context is given by the x- and y-coordinates of the goal.
Goals that are close to the starting position are easier to reach and thus we expect them to be easier to learn and their policies to transfer to more difficult \tasks.
Additionally, the context indicates if no or up to one of the four legs of the ant robot is immobilized, similar to \cite{seo-neurips20}.
We uniformly sampled $200$ \tasks which we split in equal sized, disjoint training and test sets (see Appendix~\ref{app:instdist}).
The context of the maze environment~\cite{mazes}, in which the task is to find the goal state, is given as the flattened 5x5 layout of the current instance. $100$ training and test instances each were sampled using the given maze generator.
The agent's goal in BallCatching is to direct a robot to catch a ball. 
The ball's distance from the robot as well as it goal position are given as context information.
Training and test sets were each $100$ instances large and uniformly sampled between our context bounds.
In the PointMass environment (see Figure \ref{fig:example_instances}), an agent maneuvers a point mass through a goal in a two-dimensional space.
The goal position, the width as well as the friction coefficient of the ground are given as context.
We sampled $100$ \tasks for training and testing, each for two different distributions.
The first distribution is chosen to cover the space of possible \tasks, whereas the second distribution follows that of \citeN{klink-neurips20-spdrl} and focuses on an area around a particularly difficult \task (see Appendix \ref{app:instdist}).

To be consistent and fair with respect to prior work, we trained a PPO agent \cite{schulman-arxiv17a} for Ant and a TRPO agent for PointMass \cite{schulman-corr15} and base our curriculum generation on their value-based actors.
For easier readability, all plots are smoothed over $10$ steps.
In order to monitor generalization progress over time, we evaluated the agent on all \tasks in the training and test set after each complete run through the training set.
As the results on training and test sets were very similar, we only report the test performance. In all experiments we evaluated our agents over $10$ random seeds.
For hardware specifications and hyperparameters, please see Appendix \ref{app:hypers}.

\subsection{Baselines}
In our experiments, we use three different baselines to compare \spc's performance to.
\paragraph{Round Robin (RR)} To be sure \spc outperforms instances without an intentional ordering, we compare against round robin as a common default instance ordering. 
This means that the training instances are ordered in an arbitrary way and we simply iterate over them, playing one episode per instance.
As the instance sets we use are generated randomly, this ordering is chosen at random as well. 

\paragraph{SPDRL}SPDRL \cite{klink-neurips20-spdrl} is a state-of-the-art self-paced learning method for contextual RL. 
This is notable as most curriculum learning methods are explicitly designed for goal-directed RL, which makes them unsuitable in our setting.
Counter to \spc and RR, SPDRL makes use of \atask distribution to continually sample new \tasks of a specific difficulty level.
SPDRL uses this ability to generate new \tasks to focus on particularly difficult \tasks, while largely ignoring the remaining \task space.
To this end, SPDRL requires additional domain knowledge, besides the context information, to determine which \tasks SPDRL should focus on.
Therefore we provide SPDRL with the distribution of our training and test set to focus the learning on its center.

\begin{figure*}[tbp]
    \centering
    \includegraphics[width=0.45\textwidth]{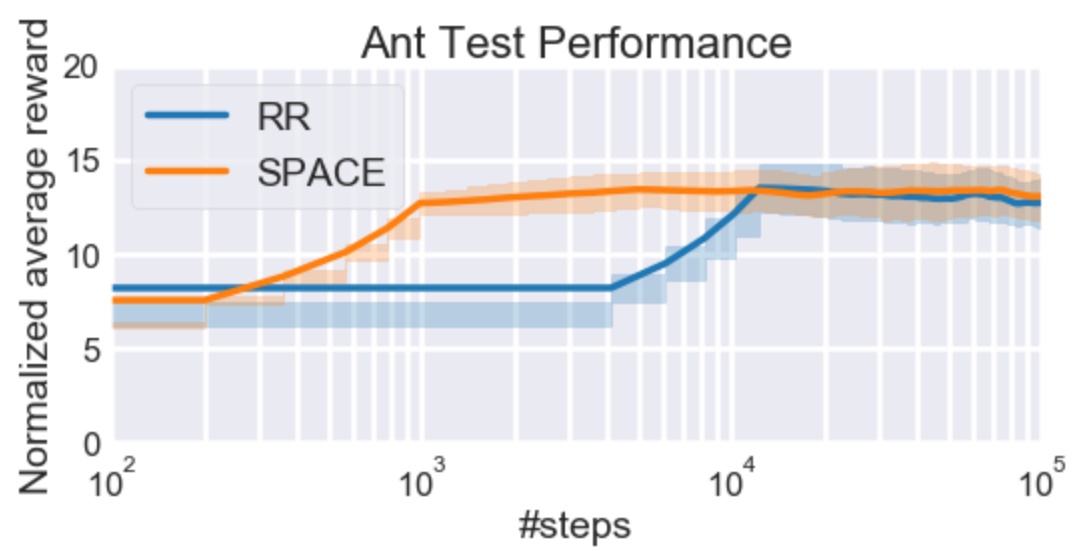}
    \includegraphics[width=0.45\textwidth]{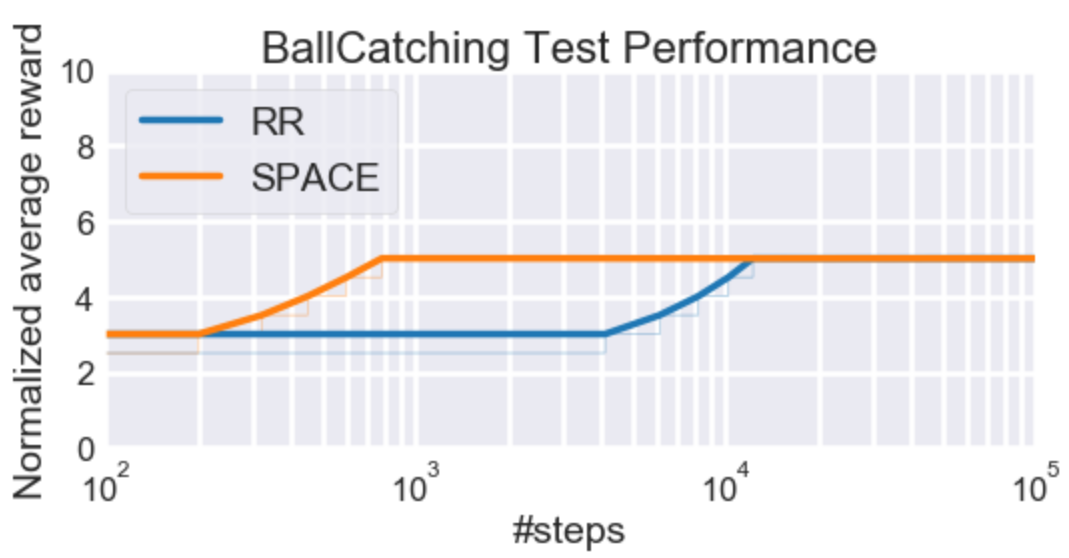}
    \caption{Mean reward ($\pm$ standard deviation) per episode over $10$ runs on Ant (left) and BallCatching (right).}
    \label{fig:ant_vs_rr}
\end{figure*}

\paragraph{c\spc}With \spc we opted for taking an agent's knowledge about the expected reward, i.e. value function, into account to determine the similarity and difficulty of \tasks.
However, as \tasks can be represented by their context, their similarity could also be quantified directly through their similarity in the context space.
This form of similarity quantification is common in fields making use of techniques such as algorithm selection~\cite{rice76a} and meta-learning~\cite{brazdil-08}.
Such curricula order \tasks according to their context similarities. A successful application of this approach can be seen in Reverse Curriculum Generation for Reinforcement Learning~\cite{florensa-corl17} where robot arm starting positions were ordered into a curriculum according to their similarity.
In other words, instead of using an agent's performance evaluations as a basis for the curriculum generation, instances with contexts that are closest to the current curriculum context are added. 
\spc's instance ordering criterion can easily be changed to compare context space distance instead of evaluations, yielding a variation we call \emph{context \spc} (c\spc).
More precisely, we replaced $d$ as our instance selection criterion (see Algorithm~\ref{SPACE} Line $10$) with the Euclidean distance to the current instance set $\mathcal{I}_{curr}$.
In such cases, c\spc can suffer from the same problems as unsupervised learning.
A priori it is not clear how to scale and weight the different context features without having any signal how the features will affect the difficulty of instances and how good the resulting curriculum will be. 
In contrast to \spc, we deem this a potential challenge in applying c\spc.
For this reason, we recommend \spc as the default approach whenever state evaluations are available.

\subsection{Does the Instance Order Matter?}
We first compare \spc to a baseline round robin (RR) agent on the Ant and BallCatching environments, to determine if \spc can find a curriculum that outperforms a random ordering.
In Figure \ref{fig:ant_vs_rr}, both agents reach the same final performance in each environment, but the agent trained via \spc learns considerably faster.
It only requires $10^3$ steps to reach a reward of around $11$ in Ant whereas RR requires roughly $10\times$ as many steps to train an agent to reach the same reward.
The results for BallCatching are similar, with \spc again being faster to reach the final performance a factor of at least $10$.
We further compare both methods on the PointMass environment when training on an \task set that was uniformly sampled from the space of possible \tasks (see Figure~\ref{fig:test_spdl}).

Here, the agent trained with \spc is only roughly twice as fast, but it substantially outperforms round robin in terms of final performance.
As the RR baseline does not care about the order in which \tasks are presented to the agent, we conclude that a more structured learning approach is needed.
From \spc's performance we can conclude that a curriculum, learned in a self-paced fashion can help improve both training performance and generalization.
The experiments in the following sections further confirm this finding.

\subsection{Comparing \spc and SPDRL}
\begin{figure}[tbp]
    \centering
    \includegraphics[width=0.45\textwidth]{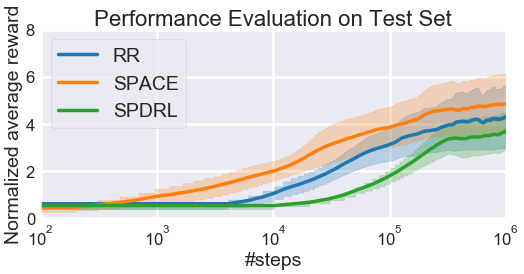}
    \caption{Mean reward per episode on a test set of uniformly sampled \tasks for PointMass.}
    \label{fig:test_spdl}
\end{figure}
We further compared \spc to SPDRL \cite{klink-neurips20-spdrl} on the PointMass environment in order to demonstrate the difference between \spc and an other self-paced learning method.
We used the same implementation and hyperparameters as in \citep{klink-neurips20-spdrl} for SPDRL.
The test performance of the agents can be seen in Figure~\ref{fig:test_spdl}.

As \mbox{SPDRL} was developed to train an agent to solve specific hard \tasks in the PointMass environment, it clearly falls short when it comes to covering the whole \task space.
The agent trained via SPDRL learns much slower and achieves a worse final performance than an agent trained via RR.
Perhaps unsurprisingly, this shows that targeting learning on hard instances does not imply the same agent can achieve good generalization performance on all instances.

\subsection{How Well does \spc Handle Complex Contexts?}

\begin{figure}
    \centering
    \includegraphics[width=0.45\textwidth]{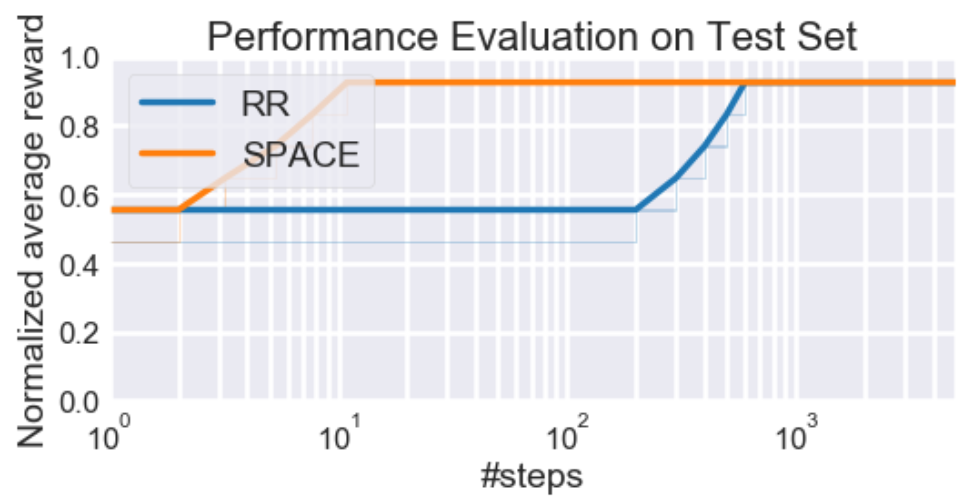}
    \caption{\spc and RR of a set of mazes over 10 runs ($\pm$ standard deviation)}
    \label{fig:mazes}
\end{figure}
While our benchmarks above are common meta-RL problem settings with different complexities, their contexts are given in a rather simple form, i.e., a short context vector directly describes the goal and environment dynamics.
The agent can therefore make a direct connection between the changes between instances and the different context descriptions.
This may not always be the case with context possibly being given within the observation, e.g., as part of an image.

In order to confirm that such a context description still enables \spc to select the appropriate next instance, we use a set of $100$ 5x5 mazes \cite{mazes} for our agent to generalize over.
The observation is the agent's current position while the context is given by the flattened maze layout. 

This context is much more complex than the previously used ones by having a structure that has been flattened and its components do not directly correlate to an increase in difficulty.
Furthermore, many of the components of the context may not change from instance to instance even though the layout, and therefore the required policy, will.

Figure~\ref{fig:mazes} shows that while the context information for this task is much more complex than previously seen, \spc still outperforms the round robin agent in a similar way than it does to for Ant and BallCatching.
The round robin agent needs several hundred episodes to solve all mazes while \spc is able to generalize from just $10$ episodes.
While the context complexity increases in this case, the value function is still able to differentiate between them enough to allow a distinction between different instances.
Therefore we expect that the representation of the context is not a major concern for the performance of \spc.

\subsection{Can \spc Be Applied Without a Value Function?}
The PointMass environment has three different context features for which we can easily use the context space distance to construct a curriculum. 
\spc and c\spc perform similarly on PointMass (Figure~\ref{fig:test_plus_cl}) in terms of learning speed and overall performance, both reaching the same performance at the same speed, with \spc learning faster on average between $10^4$ and $10^5$.
This makes both \spc variations a better choice than round robin.
\begin{figure}[tbp]
    \centering
    \includegraphics[width=0.45\textwidth]{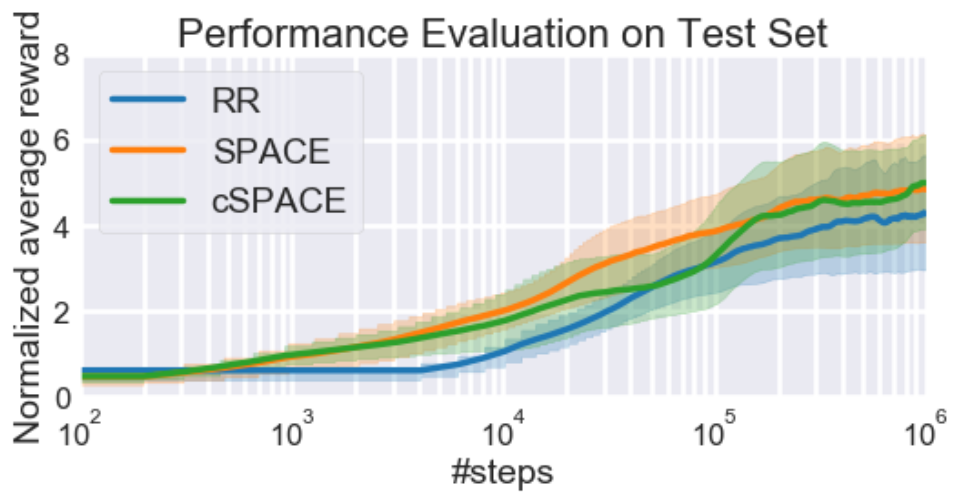}
    \caption{Mean reward on contextual PointMass with additional c\spc results.}
    \label{fig:test_plus_cl}
\end{figure}

Both c\spc and \spc are also consistent in the curricula they find.
We measured this by comparing the frequency with which each \task was used in the training set compared to the weighted frequency which gives higher rank to \tasks chosen at earlier iterations ($1$ for the \task chosen first down to $\frac{1}{|\mathcal{I}|}$ for the \task chosen last).
For c\spc, both the frequency and weighted frequency stayed the same while for \spc only the four least used \tasks differed in order between the two. 

We also compared the mean \task distance between curriculum iterations to see which method allows for smoother transitions between tasks.
Smooth transitions correlate to a handcrafted curriculum where \tasks are close together in the context space, making the curriculum easier to learn from a human perspective.
\spc moves the \task set around $4.7\%$ each curriculum iteration while c\spc moves by around $5.6\%$.
The maximum induced change is $10.1\%$ for \spc and $13.3\%$ for c\spc approach.

As we can see from these comparisons, using the information contained in an agent's value function to construct a curriculum is very similar to using the context space distance.
It needs to be said, however, that in PointMass the context reflects the environment dynamics in a very direct way, being made up of the x- and y-positions of the goal to reach and the friction coefficient.
Therefore we would expect c\spc to perform very well on such environments. 
The fact that the default \spc setting performs similarly indicates that the value function contains the information necessary to order instances into a curriculum of similar quality.
As not all environments may have such simple changes between instances, we expect that c\spc has limitations on those kinds of environments while we can expect the value-based \spc variation to continue constructing high quality curricula even in that case.

\subsection{How Robust is \spc wrt its Hyperparameters?}
\begin{table}
        \caption{Mean reward $\pm$ standard deviation for different hyperparameter values on PointMass after $10^6$ steps.}\label{fig:ablation}
        \centering
        \begin{tabular}{ccccc}
        \toprule
        {} & \multicolumn{4}{c}{$\eta$}\\
        \cmidrule(lr){2-5}
        $\kappa$ & $5\%$ & $10\%$ & $20\%$ & $40\%$ \\
        \midrule
        $1$ & $5.1 \pm 0.7$ & $4.8 \pm 1.2$ & $4.7 \pm 1.2$ & $5.2 \pm 0.7$ \\
        $4$ & $5.5 \pm 0.5$ & $5.2 \pm 0.7$ & $4.3 \pm 1.2$ & $4.4 \pm 1.0$ \\
        $16$ & $4.6 \pm 1.1$ & $3.7 \pm 1.1$ & $5.1 \pm 1.2$ & $4.8 \pm 1.0$ \\
        $32$ & $4.5 \pm 1.1$ & $4.6 \pm 1.1$ & $4.7 \pm 1.3$ & $5.0 \pm 1.2$ \\
        \bottomrule
        \end{tabular}
\end{table} 
\spc comes with two hyperparameters, the performance threshold for curriculum interactions $\eta$ and the \task increment $\kappa$.
These hyperparameters interact with each other to make \spc comparatively stable across different hyperparameter values (as seen in Figure \ref{fig:ablation}). 

By varying $\eta$ for a given value of $\kappa$, we alter the degree of stability the agent's value estimates have to reach between training episodes. 
Depending on the problem at hand, the value estimates may never be perfectly stable, therefore a very low value for $\eta$ may prevent the training set from expanding.
On the other hand, a very large value will move \spc closer to round robin.
Thus we view $\eta$ as the more important hyperparameter of the two.

Our study shows  very little performance differences for different values of $\kappa$ and $\eta$.
In part, this is because PointMass \tasks are not too difficult in the mean, therefore adding many at once does not heavily disturb learning. Larger performance thresholds $\eta$ are not an issue for this reason. A value of $5\%$ for $\eta$ seems quite low, but as the instances are relatively easy, the agent can still converge enough very quickly.
\begin{figure}[tbp]
    \centering
    \includegraphics[width=0.45\textwidth]{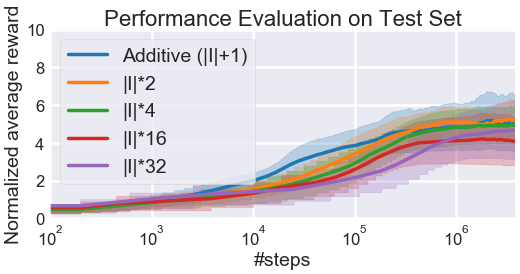}
    \caption{Mean reward per episode on a test set with fast rising \task set size (i.e. varying $\kappa$) and fixed $\eta=10\%$.}
    \label{fig:multikappa}
\end{figure}
Different values for $\kappa$ show similar results here. We expect this hyperparameter to be more important in very diverse settings with large gaps between \tasks.
We can see the effect if we multiply the size of our training set instead of adding \tasks (see Figure \ref{fig:multikappa}). In this case, there is a visible slowdown, supporting that $\kappa$ has a big influence on training performance.

From these results, we believe that it is reasonable to recommend keeping $\kappa=1$ for most applications. 
It can yield more fine-grained curricula which will be important on diverse \task sets and will likely only impact training on very large \task sets.
For $\eta$, using a low value such as $5\%$ should ensure that the agent will not be overwhelmed with new \tasks if it takes more than one curriculum iteration to learn from the current training set.

\subsection{Can Goal-Directed RL Achieve Similar Results?}\label{goalRL}
\begin{figure}[tb]
    \centering
    \includegraphics[width=0.45\textwidth]{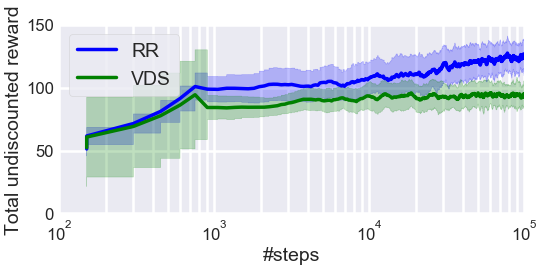}
    \caption{Total undiscounted reward of VDS and its RR baseline on AntGoal.}
    \label{fig:vds}
\end{figure}
Goal-directed RL and contextual RL are closely related flavours of RL. 
The main difference between the two is that in goal-directed RL, the context is restricted to only contain information about a desired goal-location. 
Thereby the context indicates a (final) state in which the agent should reach.
Policies learning with goals as context information thus can learn the value of executing an action in a certain state for reaching the desired goal.
Crucially however, in this setting transition dynamics are assumed to never change for different contexts.

Contextual RL subsumes goal-directed RL by also allowing the environment dynamics for the same goal to vary. 
Both environments we evaluate on exemplify this. In contextual PointMass, we specify both a goal and the friction coefficient to describe an instance. 
Even if the goal is kept the same, however, the friction is a deciding factor in the amount of force that is necessary to reach the goal correctly.
So while the force direction is the same for instances with the same goal, the required amount of force depends on the friction and therefore the agent needs both information to learn to generalize across different \tasks.

On AntGoal, we can see how this looks in practice. VDS \cite{zhang-neurips20-vds} is a recent state-of-the-art method for goal-directed curriculum construction based on HER \cite{andrychowicz-neurips17-her}. 
As a result, it can take only the ant's goal into consideration when selecting the next instance and crucially misses that in different instances the ant has different defects in some of its joints. 
As a result, the method conflates all instances with the same goals and fails to actually learn how to act on any of them.

We used the implementation and baseline of \citet{zhang-neurips20-vds} to demonstrate that while goal-directed curriculum generation approaches seem similar to SPaCE, our problem setting is out of scope for them (see Figure~\ref{fig:vds}). 
Their RR baseline has a different learning curve as ours as the algorithm used is different, but it clearly is able to improve over time.
As VDS uses goals to describe the necessary behaviour, it cannot do the same.
Therefore, curriculum learning methods for contextual RL and goal-directed RL have different scopes and cannot be compared fairly in the contextual RL setting.

\section{Limitations}
Even though \spc performed very well on the benchmarks used in this paper, there are several limitations of \spc to be considered.
The first one is that the problem and the \task set both need to support curriculum learning to some degree.
For the problem itself this means that the policy to solve it is influenced by context to a large degree, but that there is an underlying structure that can be exploited using a curriculum.
The \task set then needs to be large enough to actually give \spc the opportunity to do so. In settings with too little or very large amounts of instances, \spc becomes less efficient (see Appendix~\ref{app:sizes}). 

Furthermore, if the \task set is very homogeneous, similar to the specific \task SPDRL uses on PointMass (see Appendix~\ref{app:SPDRL}), using different \tasks for training might not make a difference.
Conversely, if the \task set is heterogeneous, preliminary experiments showed that \spc requires a larger amount of \tasks to speed up the learning. Thus not every problem is suited for curriculum learning.

Lastly, \spc is constructed to work for discrete \task spaces only, where the \task ordering is essential for learning efficiency.
We stress the fact that \spc is designed for use cases with only a few \task examples. In settings with instance generators or a lot of domain knowledge available, it is likely better to exploit them which \spc is not designed for.

\section{Conclusion}
Self-Paced Context Evaluation (\spc) provides an adaptive curriculum learning method for problem settings constrained to a fixed set of training \tasks. 
Thereby we facilitate generalization in practical applications of RL. 
We demonstrated that the order of \tasks on which agents learn their behaviour policies indeed is important and can produce a better learning efficiency.
In addition, \spc outperformed a simple round robin baseline as well as more specialized curriculum learning methods requiring access to unlimited \task generators to perform well.
Finally we evaluated the influence of \spc's own hyperparameters and showed that they are robust on the chosen environments.

Future research could address how to derive performance expectations for practical applications of RL with a limited amount of \tasks with respect to the amount of information available.
Furthermore, we might be able to use value estimation to further improve training efficiency for example by clustering \tasks of similar difficulty and limiting the amount of training on very easy ones to a minimum. 
Another important factor for contextual RL in general is catastrophic forgetting (see~Appendix \ref{app:forgetting}), which is not yet sufficiently understood, especially in the continuous context spaces we applied \spc to.

\section{Acknowledgements}
Theresa Eimer and Marius Lindauer acknowledge funding by the German Research Foundation (DFG) under LI 2801/4-1.
All authors acknowledge funding by the Robert Bosch GmbH.

\bibliography{bib/strings,bib/lib,bib/spl,bib/proc}  %
\bibliographystyle{icml2021}

\appendix

\section{Instance Sampling}\label{app:instdist}
\paragraph{AntGoal}
We uniformly sampled $100$ different goals at a distance of at most $750$ in both x- and y-direction for both training and test set respectively.

\paragraph{BallCatching} 
The distance and goal coordinates were sampled uniformly for both training and test set. The distance ranged between $0.125 \cdot \pi$ and $0.5 \cdot \pi$, the x-coordinate between $0.6$ and $1.1$ and the y-coordinate between $0.75$ and $4.0$.
Each instance set contains $100$ instances.

\paragraph{PointMass}
For PointMass, we sampled two different \task sets. First, we used the context bounds of [-4, 4] for the goal position, [0.5, 8] for the goal width and [0, 4] for friction to uniformly sample \tasks. The goal was to cover the \task space as well as possible. Our second \task set was sampled using the target distribution of SPDRL, which are normal distributions for each context component with means 2.5, 0.5 and 0 respectively as well as standard deviations of 0.004, 0.00375, and 0.002.

\section{Experiment Hardware \& Hyperparameters}\label{app:hypers}
\paragraph{Hardware}
All experiments with \spc and the baseline round robin agent were conducted on a slurm CPU cluster (see Table \ref{tab:cpu_cluster}). The upper memory limit for these experiments was 1GB per run. The SPDRL experiments were replicated on a slurm GPU cluster consisting of 6 nodes with eight RTX 2080 Ti each. Here maximum memory was 10GB. Slurm scripts for the experiments on PointMass and Ant are provided in the supplementary material. Gridworld experiments are every small and can therefore be found in a jupyter notebook.

\begin{table}[h]
    \centering
    \begin{tabular}{c|c|c|c}
         Machine no. &  CPU model & cores & RAM \\
         \hline
          1 &  Xeon E5-2670 & 16 & 188 GB \\
          2 & Xeon E5-2680 v3 & 24 & 251 \\
          3-6 & Xeon E5-2690 v2 & 20 & 125 GB \\
          7-10 &  Xeon Gold 5120 & 28 & 187 \\
    \end{tabular}
    \caption{CPU cluster used for training}
    \label{tab:cpu_cluster}
\end{table}

\paragraph{CartPole}\label{app:grid}
We used a DQN implementation in the top-$10$ on the environment leaderboard to ensure fair performance for round robin and \spc agents \cite{cartpole-impl}. 
We did not change any hyperparameters from that implementation and used $\kappa = 1$ and $\eta = 2.5\%$ for all experiments.

\paragraph{Other benchmarks}\label{app:point}
For both experiments we used stable baselines version 2.9.0 \cite{stable-baselines} with TRPO for PointMass and PPO2 for all other benchmarks. The policies are encoded by an MLP in both cases, with two layers of $64$ units for PPO. For PointMass, we used the default from the SDPRL paper with $21$ layers of 64 units each. The discount factor was $0.95$. 
The PPO2 specfic hyperparameters included no gradient clipping, a GAE hyperparameter $\lambda$ value of $0.99$ and an entropy coefficient of $0$.
For TRPO we used again used the same hyperparameters as SPDRL with a GAE hyperparameter $\lambda$ of $0.99$, a maximum KL-Divergence of $0.004$ and value function step size of around $0.24$. 
Any hyperparameters not mentioned were left at the stable baselines' default values. The random seeds were used to seed the environments with the corresponding seeding method. 

\section{Additional Comparison to SPDRL}\label{app:SPDRL}
\begin{figure}[h]
    \centering
    \includegraphics[scale=0.25]{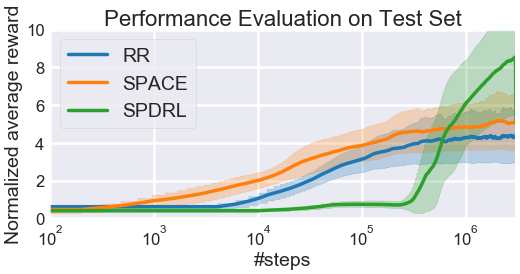}
    \caption{Mean reward per episode on a test set of hard \tasks with small goals and low friction.}
    \label{app-fig:test_spdl_hard}
\end{figure}
In contrast to \spc, SPDRL is designed to solve hard instances. To this end, it samples harder and harder instances over time. Therefore, we additionally study how \spc, round robin (RR) and SPDRL compare on hard instances sampled from the SPDRL target distribution, see Figure~\ref{app-fig:test_spdl_hard}.
\tasks in this distribution typically have small goal sizes and low friction, both of which contribute significantly to an increased difficulty.

As in the original paper, SPDRL was allowed to sample as many instances as needed from the distribution, whereas \spc and RR still only got access to a finite set of $100$ instances. 
In this setting, agents trained either via \spc or RR exhibit a similar learning behaviour as on the space covering \task set.
For the first $\sim200\,000$ steps both agents outperform the agent trained via SPDRL; RR anyway focuses on the whole target distribution from the beginnig and \spc is more free in the way it can select instances with fast training progress.
During this time, SPDRL trains the agents on some easy instances, while gradually adapting the \task distribution to focus on ever more difficult tasks.
Note that the level of difficulty is not determined solely by the agent being trained via SPDRL, as done in \spc, but is determined by an expert beforehand.

Once the agent trained via SPDRL is capable of homing in on the difficult \tasks it outperforms the other agents, as it can exploit its domain knowledge to sample ever more similarly difficult \tasks, while \spc and RR are stuck with the limited number of example \tasks and still try to cover the entire instance space.
To achieve this feat, SPDRL requires substantial expert knowledge about which \tasks to focus on.
In essence, the agent trained via SPDRL in the end is only capable of solving a few hard \tasks with very little variation and will fail to perform well on \tasks that are not narrowly aligned with the assumed \task distribution.

To be able to know which \tasks SPDRL should focus on, additional time and effort have to be spent to identify how to quantify \emph{difficulty} for SPDRL.
This effort is not reflected in Figure~\ref{app-fig:test_spdl_hard} and would move the curve of SPDRL even further to the right.

\section{Does the Training Set Size Matter?}\label{app:sizes}
To answer this question, we used \spc to train agents with varying \task set sizes. 
Figure \ref{fig:sizes} shows the test performance for differently sized \task sets. Intuitively, one might think that performance should improve with more \tasks as they cover the \task space better. 
Indeed, the results for training sets with only $25$ and $50$ \tasks are visibly worse than for larger sets. 
On the remaining \task sets, the agent show very similar performance, however. 
Note that the performance seems to increase from \atask set size of $100$ to $200$, but slightly drops again afterwards. 
There are multiple factors potentially contributing to this effect.

\begin{figure}[h]
    \centering
    \includegraphics[scale=0.25]{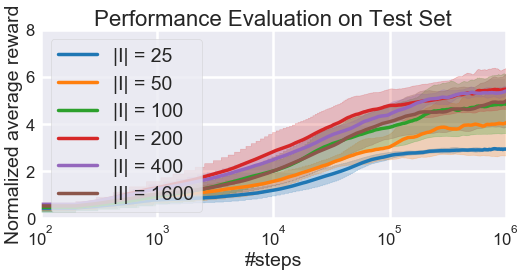}
    \caption{Mean reward per episode on test set for different sized \task sets.}
    \label{fig:sizes}
\end{figure}

The first is that the agent cannot incorporate any more information from the additional \tasks, maybe due to limited network capacity or due to the fact that smaller \task sets already cover the space adequately. 
Furthermore, as we only extend the \task set by one \task at a time, there are more learning steps between curriculum iterations the larger the \task set is, thereby slowing the process down.
Especially an agent trained on $1\,600$ \tasks will suffer from this.

Lastly, \spc improves upon the RR baseline by ordering training \tasks and thus smoothing the progression through the \task space. 
Larger \task sets offer an inherently smoother representation of the \task distribution, therefore diminishing the effect of \spc. 
In real-world application settings, we will rarely have access to such large numbers of instances and therefore, it is unlikely that such diminishing performance effects can be observed.
This  shows that the strength of our method comes to full effect when learning on a sparse representation of our \task space.

\section{Comparison of \spc Curricula}
To give some insight into which curricula \spc found on our benchmark environments, we compare how they behave across random seeds and how they compare to c\spc curricula.
We use Kendall's tau to determine how similar the order in which the instances are added to the training set is.

On PointMass, \spc finds a curriculum that stay very consistent across all random seeds, showing a correlation of at least $98.9\%$ each to the mean curriculum.
The same is true for the c\spc variation, where the correlation is above $93.8\%$ per seed.
Interestingly, these curricula are uncorrelated with a correlation of $-0.04$.
In both we cannot make out a human readable progression in a single context feature (see Figure~\ref{app-fig:pm_context}), their curricula do not correspond to any manual instance ordering.
As both perform well nonetheless, we can see that learning can be improved by multiple different curricula on this environment.
\begin{figure}
    \centering
    \includegraphics[width=0.5\textwidth]{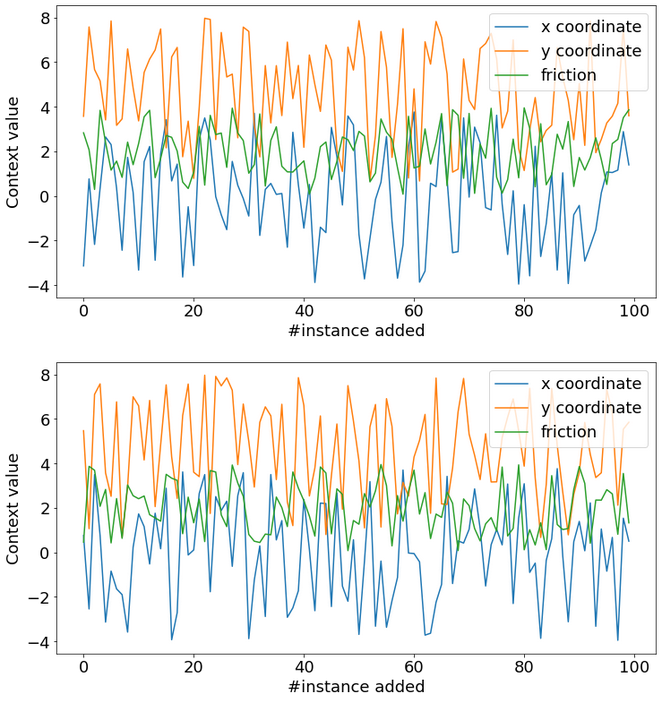}
    \caption{Context feature progression during training for \spc curriculum (top) and c\spc curriculum (bottom).}
    \label{app-fig:pm_context}
\end{figure}

\spc and c\spc produce almost equally unrelated curricula on AntGoal (correlation of $0.07$), but while the curriculum stays as consistent across seeds for c\spc, the same cannot be said for \spc. 
Here the correlation to the average curriculum ranges from $14.1\%$ to $52.4\%$. 
The correlations between the seed curricula fall into the same range, confirming that the \spc agent trains on a very different curriculum for each seed.
CartPole shows a similar behaviour, the curriculum varying quite a bit between seeds. 
Therefore we can conclude that \spc does not find a singular curriculum, but depends on the initialization of environment and model.
This is in contrast to c\spc which stays relatively static due to the context features being constant.

These comparisons suggest that we neither \spc nor c\spc finds an optimal curriculum for PointMass, AntGoal or CartPole. 
It seems, however, that we do not need an optimal curriculum for training at all, as even the $10$ very different curricula \spc finds on AntGoal perform vastly superior to the round robin default.
Curriculum Learning should thus focus on reliably and quickly finding good curricula in addition to finding qualitatively better ones.

\section{The Influence of Catastrophic Forgetting}
\label{app:forgetting}
When training across multiple instances, forgetting already learnt policies on a subset of instances is a concern \cite{beaulieu-ecai20}.
We analyze how often \spc and RR agents forget policy components in our PointMass experiments by observing performance development during training.
We selected PointMass for this analysis as here policies that are diverse both in how they react to different goal settings and different friction levels are required.
That means the policy has to completely change between the extremes of the context which is not required of our other benchmarks where underlying mechanics, e.g. walking for the Ant, stay very similar.

During the training on PointMass, we observed $8$ out of $100$ instances for which the performance decays after an initial improvement. 
We would expect the performance to stay at least constant if no forgetting takes place, so the agent likely forgets parts of the policy for these instances in favor of improving on others. 
The effect is about the same size for round robin agents where we can observe the same for $6$ out of $100$ instances.

Another reason for attributing this performance decay to forgetting is that on a purely goal-based PointMass variation, the number of instances on which we can observe this effect is slightly smaller (only $4$ instances), though not significantly so. All performance decay happens after learning has stagnated on all instances, however.
In this easier, purely goal-based setting we could therefore stop training early and would avoid performance decay entirely.
This points towards the added complexity of the setting being harder to capture for our agents.

While the effects on both \spc and RR agents are not very large in our experiments, catastrophic forgetting is therefore certainly important in the field of contextual RL. 
Future work could on integrate \spc with existing efforts to reduce this effect like ANML \cite{beaulieu-ecai20}.
A specific aspect of this research that would need to be extended is preventing forgetting in continuous context spaces in addition to the existing successes in discrete ones. 
\end{document}